
\documentclass[nohyperref]{article}

\usepackage{color, colortbl}
\definecolor{Gray}{gray}{0.9}

\usepackage{colonequals}
\usepackage{microtype}
\usepackage{graphicx}
\usepackage{subfigure}
\usepackage{booktabs} 
\usepackage[section]{placeins}
\usepackage{hyperref}



\usepackage[accepted]{icml2023}

\usepackage{amsmath}
\usepackage{amssymb}
\usepackage{mathtools}
\usepackage{amsthm}

\usepackage[capitalize,noabbrev]{cleveref}

\theoremstyle{plain}
\newtheorem{theorem}{Theorem}[section]
\newtheorem{proposition}[theorem]{Proposition}

\theoremstyle{definition}

\newtheorem{assumption}[theorem]{Assumption}
\theoremstyle{remark}

\usepackage[textsize=tiny]{todonotes}
\usepackage{xspace}

\newcommand{\green}[1]{{\color{violet}{({#1})}}}
\newcommand{\citeremind}[1]{{[$\blacksquare$}]}

\newcommand{\ourmodel}{Multi-Symmetry Ensemble\xspace}
\newcommand{\ourmodels}{Multi-Symmetry Ensembles\xspace}
\newcommand{\mse}{MSE\xspace}
\newcommand{\rebutadd}[1]{{#1}}

\icmltitlerunning{Multi-Symmetry Ensembles: Improving Diversity and Generalization via Opposing Symmetries}

\begin{document}

\twocolumn[
\icmltitle{Multi-Symmetry Ensembles: Improving Diversity and Generalization \\ via Opposing Symmetries}




\begin{icmlauthorlist}
\icmlauthor{Charlotte Loh}{eecs,mitibm}
\icmlauthor{Seungwook Han}{eecs}
\icmlauthor{Shivchander Sudalairaj}{mitibm}
\icmlauthor{Rumen Dangovski}{eecs}
\icmlauthor{Kai Xu}{amazon}
\icmlauthor{Florian Wenzel}{aws}
\icmlauthor{Marin Solja\v{c}i\'{c}}{physics}
\icmlauthor{Akash Srivastava}{mitibm}
\end{icmlauthorlist}

\icmlaffiliation{eecs}{MIT EECS}
\icmlaffiliation{physics}{MIT Physics}
\icmlaffiliation{mitibm}{MIT-IBM Watson AI Lab}
\icmlaffiliation{aws}{AWS (work done outside of Amazon)}
\icmlaffiliation{amazon}{Amazon (work done outside of Amazon)}

\icmlcorrespondingauthor{Charlotte Loh}{cloh@mit.edu}

\icmlkeywords{Machine Learning, ICML}

\vskip 0.3in
]



\printAffiliationsAndNotice{}  

\begin{abstract}
Deep ensembles (DE) have been successful in improving model performance by learning diverse members via the stochasticity of random initialization.
While recent works have attempted to promote further diversity in DE via hyperparameters or regularizing loss functions, these methods primarily still rely on a stochastic approach to explore the hypothesis space. 
In this work, we present Multi-Symmetry Ensembles (MSE), a framework for constructing diverse ensembles by capturing the multiplicity of hypotheses along symmetry axes, which explore the hypothesis space beyond stochastic perturbations of model weights and hyperparameters.
We leverage recent advances in contrastive representation learning to create models that separately capture opposing hypotheses of invariant and equivariant \rebutadd{functional classes} and present a simple ensembling approach to efficiently combine appropriate hypotheses for a given task.
We show that MSE effectively captures the multiplicity of conflicting hypotheses that is often required in large, diverse datasets like ImageNet.
As a result of their inherent diversity, MSE improves classification performance, uncertainty quantification, and generalization across a series of transfer tasks. 
Our code is available at  \url{https://github.com/clott3/multi-sym-ensem}

\end{abstract}




\section{Introduction}
\label{sec:introduction}
\begin{figure}[t]
\vskip 0.1in
\begin{center}
\centerline{\includegraphics[width=1.25\columnwidth]{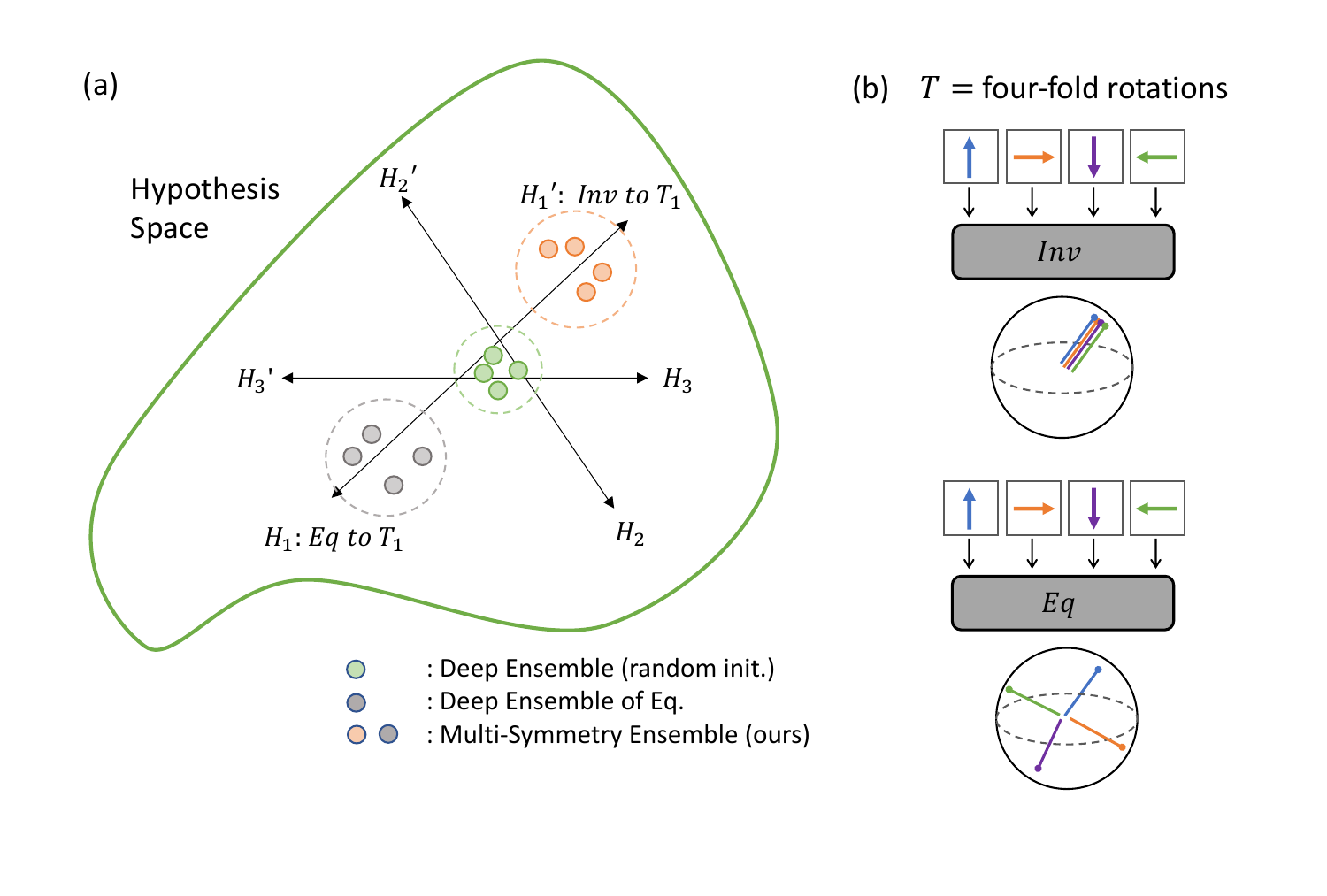}}
\vskip -0.32in
\caption{\textbf{(a)} A comparative illustration of the diversity in the hypothesis space that traditional deep ensembles and our \ourmodels can achieve. While deep ensembles are effective at capturing different solutions around one hypothesis, \ourmodels can learn diverse solutions around inherently opposing hypotheses. \textbf{(b)} Schematic visualization of invariance (top) v.s. equivariance (bottom) for the four-fold rotation. The spheres denote the representation space of the models.}
\vskip -0.5in
\label{fig:illus}
\end{center}
\end{figure}
The field of computer vision has seen significant progress in various tasks such as classification and semantic segmentation in recent years. 
This success can be attributed to the advancements in model architectures, learning methods, and the availability of large-scale datasets \cite{vit, jft, simclr}. 
Large and diverse datasets have proved crucial in improving performance, yet they present new challenges. The increased diversity of datasets makes it more difficult for a single dominant hypothesis to capture all semantic classes. 
To overcome this problem, model ensembling~\citep{nn_ensem,bagging} can be utilized to combine multiple networks.
A popular approach is Deep Ensembles (DE)~\citep{DE}, which combines networks with different random initializations and relies on the non-convexity of the loss landscape~\citep{DE_losslandscape} and stochasticity of the training algorithm to arrive at different solutions.
They often significantly improve model performance and uncertainty quantification~\citep{uq_ovadia}.\looseness=-1

Their success can be attributed to the diversity amongst the ensemble members~\citep{dice}; ensemble performance can be significantly improved relative to the individual models when the members are diverse and their errors are uncorrelated (i.e. when the members make mistakes on different samples). 
However, purely relying on the stochasticity in the random initialization and the training algorithm can only provide a limited amount of diversity~\cite{dice} and previous works have attempted to promote diversity further by training models with different data augmentations~\cite{divensem_improve_calib}, hyperparameters~\cite{hyperparameter_ens}, or explicitly encouraged via loss functions~\cite{pang_diversity,dice}. 
Nonetheless, these methods primarily still rely on a stochastic approach to explore the hypothesis space.

In this work, we present a framework for constructing ensembles that are inherently diverse with respect to certain symmetry groups and thus in this regard,
are \textit{non-stochastic in exploring the hypothesis space}.
We argue that current ensembling approaches are not effective in capturing the multiplicity of hypotheses, particularly along symmetry axes, which are necessary for large vision datasets.
We motivate this with an intuitive example of rotational symmetry on the ImageNet~\citep{deng2009imagenet} dataset.
Recent works~\citep{rotnet,essl} have demonstrated the effectiveness of encoding rotational equivariance
\footnote{Equivariance can be best understood when contrasted with invariance -- while invariance requires the outputs to be unchanged when the inputs are transformed, equivariance requires the outputs to transform \textit{according to the way inputs are transformed}. \rebutadd{While invariance is a trivial instance of equivariance (where $T_g'$ in~\cref{eq:equi_def} is the identity), in this work we use ``equivariance'' to refer specifically to non-trivial equivariances.}} on ImageNet.
\rebutadd{In this work, the term ``equivariance'' is used to explicitly refer to non-trivial equivariances, i.e. not encompassing the trivial instance, invariance (see footnote 1).}
\rebutadd{Empirically, we found equivariance to be useful in images with a clear stance (e.g. dogs, where an upside-down dog is never observed in the dataset) and thus encoding information about its pose (i.e. rotation) aids their characterization.}
However, in large datasets like ImageNet, there also exist images like flowers that contain rotational symmetry and thus encoding rotational invariance, \rebutadd{i.e. the removal of pose information}, may be more desirable (see~\cref{fig:illus}b for an illustration of invariance versus equivariance).

Given the opposing nature of these hypotheses (see \rebutadd{footnote 1} and~\cref{fig:illus}b), a stochastic ensembling approach cannot capture both simultaneously; i.e. a deep ensemble of rotational equivariant classifiers cannot be made rotational invariant by simply perturbing hyperparameters or model weights at initialization. 
We visually illustrate this point in Figure \ref{fig:illus}a. 
To address this problem, we leverage recent advances in contrastive representation learning~\cite{simclr, essl} to create models that separately capture opposing invariant and equivariant hypotheses  around a given symmetry group. 
In particular, in contrast to task-specific diversity promoting mechanisms of previous works~\cite{pang_diversity,dice}, our approach aims to learn diverse representations that individually respect different symmetries and such task-agnosticity is desirable when transferring to new downstream tasks.

We present a practical, greedy ensembling approach that efficiently combines appropriate hypotheses for a given set of tasks. 
We provide extensive empirical results and analyses to demonstrate the superior performance of our method in classification performance, uncertainty quantification and transfer learning on new datasets. 
Our contributions can be summarized as follows:
\begin{itemize}
    \item We empirically show that large, diverse datasets like ImageNet inherently have multiple and conflicting dominant hypotheses for classification. 
    \item We propose \ourmodel (\mse), an ensembling method to train and combine models of opposing hypotheses with respect to certain symmetry groups. In contrast to previous works that rely on stochasticity created via random initializations or hyperparameters, we directly guide diversity exploration along the axes of symmetry.
    \item We demonstrate that \mse  can leverage weaker models from the opposing hypothesis that improve performance more than the ensemble of higher-accuracy models corresponding to the leading hypothesis. To this end, we conduct a detailed empirical study to show that \mse improves classification performance and uncertainty quantification, and better generalizes across a series of transfer tasks.  
    \item We also show that our method applies to different symmetry groups and that opposing hypotheses across multiple axes of symmetries further improve diversity.
\end{itemize}

\section{Background and Related Work}
\label{sec:related_work}

\paragraph{Neural network ensembles and diversity.}
Using an ensemble of neural networks to improve performance and generalization is a well known technique in machine learning that existed decades ago~\citep{nn_ensem, bagging}. 
Deep ensembles~\cite{DE} create an ensemble of networks by using different random initializations, and ~\citet{mcdropout, batchensemble, mimo} improve upon this by making it more computationally efficient. Diversity is an important feature in ensembles, since averaging many models that give the exact same prediction is no better than using a single model. 
\citet{pang_diversity, lee_diversity, diversity_prediction} create diversity by changing the losses or the architecture. \citet{hyperparameter_ens} create ensembles using different hyperparameters and~\citet{divensem_improve_calib, augmix} leverage data augmentation strategies. All of these methods rely on the stochasticity from the architecture, random initialization, or hyperparameters to generate different solutions. However, our work differs in that, we learn diverse solutions by leveraging opposing functional classes along certain symmetry groups. Moreover, the supervised learning settings do not focus on the transferability of representations, and therefore we propose an ensembling method that transfers better to a series of new downstream datasets.
Along the line of learning diverse representations, \citet{no_one_rep_rules, modelsoups} both conducted large-scale empirical studies of ensembling representations and models across architectures, training methods and datasets.
Our work differs from these in that instead of a large-scale study of ensembling representations, our work introduces and focuses on a new technique of creating diversity in representations by using equivariances and invariances.\looseness=-1
\paragraph{Contrastive Learning and augmentations.}
Contrastive representation learning~\cite{moco,simclr} is an effective method for learning transferable representations with self-supervised learning.  
The role of augmentations in contrastive learning has been extensively studied~\cite{simclr,tian2020makes,looc,reed2021selfaug,essl} with the objective of discovering useful augmentations to improve performance on downstream tasks. In contrast, our work takes a general approach of creating more robust classifiers by ensembling models of opposing equivariances. 
\citet{looc} designed a training objective that simultaneously computes a contrastive loss on a variety of projected representations, and each loss is associated with leaving out one augmentation from the complete set of augmentations. 
Our contribution is different from~\cite{looc}, because we use equivariance, instead of removing augmentations. 
In addition, rather than a joint or concatenated latent space that is specialized to the removed augmentation, we create independent latent spaces and use an ensembling approach to accumulate the predictions of each member.
A growing field of contributions introduce equivariance to models via self-supervised learning~\cite{essl,equimod}.\looseness=-1
\paragraph{Equivariant neural networks.}
Let $f$ be a continuous function (parameterized with an encoder network) and $\mathbf{x}$ be the input; equivariance to a group $G$ of transformations is mathematically defined as
\begin{equation}\label{eq:equi_def}
    \forall{\mathbf{x}}: f(T_g(\mathbf{x})) = T_g'(f(\mathbf{x}))
\end{equation}
where $T_g$ denotes the transformation associated with a group element $g\in G$. 
In this formulation, invariance can be understood as a particular (trivial) instance where $T_g'$ is the identity function, i.e. $f(T_g(\mathbf{x})) = f(\mathbf{x})$ and the output of the network does not change after a transformation to the input.
Instead, equivariance requires the network output to change in a well-defined manner according to the way the input has been transformed.
Intuitively, the difference between invariance and non-trivial equivariance can be understood as follows; while invariance encourages representations to remove information about the way they are transformed, non-trivial equivariance encourages the network to preserve this transformation information.
This allows for a broader class of inductive biases that allows the model to make a decision on how to utilize this information during prediction.

Group equivariant neural networks~\cite{cohen2016group,weiler_general_2019,weiler_3d_2018} are usually designed by generalizing convolutional neural networks to arbitrary groups by constructing specialized kernels that satisfies the equivariance constraints. 
As such these networks usually require specialized architectures that are less commonly used on large-scale vision benchmarks.
\citet{essl} proposes a technique to encourage equivariance by using a prediction loss and show that approximate equivariance can be achieved by predicting the transformation.
Along a similar line of work,~\cite{colorful, rotnet,jigsaw} propose to learn visual representations by pretext tasks of predicting transformations.
In our work, to avoid having specialized architectures and to keep the framework highly general and flexible, we adopt the method proposed in~\cite{essl} to achieve \textit{approximate equivariance} by a training objective that predicts the transformations applied to the input during the self-supervised learning stage. 
However, instead of learning better representations, our work focuses on the importance of creating ensembles containing members having opposing equivariances. 
\rebutadd{Furthermore,~\cite{essl} showed that rotational equivariance leads to better representations while rotational invariance is harmful; in this work, we show that while equivariance is useful for the majority of classes, there is a significant proportion of the data that can benefit from rotation invariance.}
\section{\ourmodels}
\label{sec:equivariant_ensembles}
We go beyond the typical deep ensembling approach by constructing ensembles that include opposing hypotheses along a set of symmetries.
 We start by pre-training representation learning models using contrastive learning methods \cite{simclr,essl}. 
The pre-training step allows for encoding the necessary equivariances and invariances into the models. 
During fine-tuning, the pre-trained models are adapted into classifiers, and finally, these classifiers are combined into an ensemble.
We demonstrate analytically in a simple setting via~\cref{prop:simple_analytical} in Appendix~\ref{app:formalism} that the trained classifiers of the equivariant and invariant models capture different hypotheses.





\subsection{Invariant and Equivariant Constrastive Learners}
We now describe the paradigm to obtain the diverse ensemble members by inducing different equivariance and invariance constraints to the models.
For ensemble member $m$, let $f_m(\cdot, \theta_m)$ denote the backbone encoder and $p_m(\cdot, \phi_m)$ the projector \rebutadd{(here, a 3-layer MLP), parameterized by $\theta_m$ and $\phi_m$ respectively}. Let $T_{base}$ be the base set of transformations (e.g., RandomResizedCrop, ColorJitter).
We realize the axis of symmetry through the transformations in SSL. 
Let $T^m$ denote the transformation to which member $m$ should be invariant or equivariant.

\rebutadd{Contrastive learning operates by learning representations such that views of an image created via $T_{base}$ are pulled closer together while pushed away from other images. In doing so, the model learns representations that are invariant to $T_{base}$. This is realized through the InfoNCE loss \cite{simclr}. 
Specifically, for a batch of $B$ samples, the loss is
\begin{equation}
     \mathcal{L}_{CL}^m = \sum_{i=1}^B -\log \frac{\exp(\hat{\mathbf{z}}_i^m \cdot \hat{\mathbf{z}}_j^m) / \tau }
     {\sum_{k\neq i}\exp(\hat{\mathbf{z}}_i^m \cdot \hat{\mathbf{z}}_k^m / \tau) }
\end{equation}
where $\hat{\mathbf{z}_i^m}$ and $\hat{\mathbf{z}_j^m}$ are the $\ell_2$-normalized representations of two views of an input $\mathbf{x}_i$ and $\hat{\mathbf{z}_i^m}= p_m\circ f_m(\mathbf{x}_i)/ ||p_m \circ f_m(\mathbf{x}_i)||$,  and $\tau$ is a temperature hyperparameter.
}
\paragraph{Learning invariant models.}
 Leveraging the contrastive learning framework, we learn an invariant model \rebutadd{by adding $T_m$ into the set of transformations, i.e.}
by optimizing the InfoNCE loss \cite{simclr} with the augmentations set to $T=T_{base} \cup \{T_m\}$. 
\paragraph{Learning equivariant models.}
We learn a model that is equivariant to $T^m$ by initializing a separate prediction network $h_m(\cdot, \psi_m)$ and use a prediction loss as proposed in~\cite{essl}. Let $G^m$ be a group to which member $m$ is equivariant, i.e. its elements $g \in G^m$ transform the inputs/outputs according  to~\cref{eq:equi_def}.  
The goal of $\mathcal{L}_{eq}^m$ is for the model to predict $g$ from the representation $h_m\circ f_m(T_g (\mathbf{x}_i))$.
By doing such, we encourage equivariance to $G^m$. In our work, we consider discrete and finite groups of image transformations (e.g., 4-fold rotations, color inversion (2-fold), and half-swaps (2-fold)). For discrete groups, $\mathcal{L}_{eq}^m$ takes the form of a cross-entropy loss,
\begin{equation}\label{eq:eq_loss}
     \mathcal{L}_{eq}^m = \sum_{i=1}^B \sum_{g}^{|G|} H(h_m\circ f_m(T_g(\mathbf{x}_i)), g)
\end{equation}
where $H$ denotes the cross-entropy loss function and $|G|$ denotes the order or cardinality of the group, i.e. number of elements. 
As an example, for the group of 4-fold rotations, $g$ takes on values in $\{0,1,2,3\}$ corresponding to $T_g$ in $\{0^{\circ}, 90^{\circ}, 180^{\circ}, 270^{\circ}\}$ rotation respectively.
The sum over $g$ is explained as follows; for every input, four versions are created for each of the 4 possible rotations and a cross-entropy loss is applied with their corresponding label in $\{0,1,2,3\}$. 
The combined optimization objective of an equivariant model for a batch of $B$ samples is $\mathcal{L} = \sum_{m=1}^{M} \mathcal{L}_{CL}^m + \lambda \mathcal{L}_{eq}^m$.
\rebutadd{Here, the InfoNCE loss $\mathcal{L}_{CL}^m$ encourages invariance only to $T_{base}$, i.e. $T_m$ is not included in the set of augmentations.}
\vspace{-0.3em}
\paragraph{Forming the ensemble.}
The contrastive pretraining step ensures that the representation learners have the appropriate equivariance and invariances. The next step is to convert these pretrained models into classifiers. This can be done using two methods: linear-probing or fine-tuning. Linear-probing involves training a logistic regression model to map the learned representations to the semantic classes while keeping the pretrained models frozen. Fine-tuning, on the other hand, allows the pretrained models to be updated during training, often resulting in higher accuracies on the same dataset. 
In this work, we always use fine-tuning to convert the pretrained models to classifiers unless specified otherwise. We propose two strategies for ensembling these classifiers: (1) \textit{Random} and (2) \textit{Greedy}. 
\rebutadd{In both cases, we start by selecting a random model from the leading hypothesis and sequentially add models until the ensemble has $M$ members.} 

\rebutadd{(1) \textit{Random}: \mse under the \textit{Random} strategy alternates between the two functional classes at every stage, where a random model from that functional class is sampled without replacement, i.e. \mse always consist of models from both hypotheses.}
The baselines under the \textit{Random} strategy is equivalent to randomly selecting $M$ models.

\rebutadd{(2) \textit{Greedy}:} The \textit{Greedy} strategy is inspired by the approach of \cite{hyperparameter_ens}. 
At each stage, the best model is chosen based on the validation set score by searching over all models. 

We compute the ensemble prediction $\bar{f}(\mathbf{x})$ by taking the mean of the member's \textit{prediction probabilities}
    $\bar{f}(\mathbf{x}) = \frac{1}{M} \sum_{i=1}^M f_i (\mathbf{x})$.

\section{Experimental Setup}
\label{sec:exp_setup}
We use the standard ResNet-50 architecture for the backbone encoder and follow the experimental setup in~\cite{essl}. 
Our main results consider four-fold rotation transformation as the primary hypothesis class.
All contrastive learning models were trained for 800 epochs with a batch size of 4096.
For the equivariant models, $h_m$ is a 3-layer MLP and $\lambda$ is fixed to 0.4.
Additional training details can be found in the Appendix.
\paragraph{Evaluation Protocol.}
After contrastive pre-training, we initialized a linear layer for each backbone and fine-tuned them end-to-end for 100 epochs using the SGD optimizer with a cosine decay learning rate schedule. We conducted a grid search to optimize the learning rate hyperparameter for each downstream task.
\paragraph{Transfer tasks.}
We evaluated the transfer learning performance on 4 natural image datasets. Through these experiments, we evaluated the generalization performance of \ourmodels on new downstream tasks and to show how models with opposing hypotheses can contribute to meaningful diversity. For each dataset, we randomly initialized a linear classifier for each encoder pre-trained on ImageNet and fine-tuned both the encoder and the linear head for 100 epochs. Following the approach in \cite{transfer2019}, we performed hyperparameter tuning for each model-dataset combination and selected the best hyperparameters using a validation set. For the iNaturalist-1K dataset \cite{inat}, due to its large size and computational limitations, we used the linear evaluation protocol \cite{Wu2018UnsupervisedFL, cpc, Bachman2019LearningRB, Wenzel2022AssayingOG} which involves training a linear classifier on top of a frozen encoder.
\section{Results}
\label{sec:results}
In the following sections, we provide empirical evidence to support our claim that the diversity of opposing hypotheses along the symmetry axes improves ensemble performance, both in terms of model accuracy and generalization. We begin by demonstrating that both the invariant and equivariant hypotheses along the rotational symmetry tend to be equally dominant in large datasets like ImageNet. Next, we show that \mse, which incorporates these hypotheses, outperforms strong DE-based baselines that do not. We then provide an analysis of diversity and uncertainty quantification of \mse. In~\cref{sec:transform}, we evaluate \mse on a set of transfer tasks. Finally, we study the impact of exploring opposing hypotheses along different symmetry groups on model performance.

\begin{table}[t]
\caption{\textbf{Most suitable functional class differs within a dataset.} The top-half shows the overall accuracy for models from the SimCLR baseline and each of the opposing hypotheses wrt 4-fold rotations. The bottom-half shows the proportion of classes within each dataset where each hypotheses dominate (i.e. averaged over all samples within the class), suggesting that hypotheses apart from the one with the highest individual accuracy are still beneficial.}
\label{tab:proportion}
\begin{center}
\begin{small}
\begin{sc}
\begin{tabular}{@{}lcc@{}}
\toprule
\multicolumn{3}{l}{\textbf{Model Accuracy on ImageNet (\%)}}                  \\ \midrule
{Baseline}                     & \multicolumn{2}{c}{{76.5}} \\
Eq                     & \multicolumn{2}{c}{76.9} \\
Inv                    & \multicolumn{2}{c}{76.0} \\ \midrule
\multicolumn{3}{l}{\textbf{Proportion of Classes (\%)}}           \\ \midrule
Eq $>$ Inv    & \multicolumn{2}{c}{47.7} \\
Eq $<$ Inv       & \multicolumn{2}{c}{36.3} \\
Eq $==$ Inv              & \multicolumn{2}{c}{16.0} \\ \bottomrule
\end{tabular}
\end{sc}
\end{small}
\end{center}
\end{table}

\paragraph{Dominance of hypothesis are class-dependent.}
In Table \ref{tab:proportion}, we compare two models $f_\mathrm{roteq}$ and $f_\mathrm{rotinv}$ that respectively have trained to be invariant ($\mathrm{Inv}$) and equivariant ($\mathrm{Eq}$) to four-fold rotation as contrastive learners. Even though the invariant model falls behind quite significantly from the equivariant model by 0.9\% in the overall performance on ImageNet, in contrast to the observation from \cite{no_one_rep_rules, modelsim2019}, we found the dominance of a hypothesis to be highly class-dependent, as opposed to the leading hypothesis performing better uniformly across all classes. 
While the leading equivariant hypothesis dominates in 47.7\% of ImageNet classes, the invariant still proves to be more useful in a significant 36.3\% of the classes. We repeat this experiment for a number of large and small datasets, as shown in Figure \ref{fig:eff_bar}, and found that large datasets tend to follow this trend. 

\begin{table}[tb]
\caption{\textbf{\ourmodels capturing opposing hypothesis outperform naive ensembles of the same hypothesis.} The top-half of the table compares the acccuracy of naive ensemble of a single hypothesis and a random ensemble of both equivariant and invariant hypotheses. We show that as the number of members in the ensemble grow, capturing weaker performing models from the opposing hypothesis outperforms the naive-counterpart. The lower half of the table shows that, the gains are further amplified when the ensembles are chosen in a greedy manner.}
\label{tab:IN_results}
\setlength\tabcolsep{2.8pt}
\begin{center}
\begin{small}
\begin{sc}
\begin{tabular}{lcccccccc}
\toprule
& $M=2$ & $M=3$ & $M=4$ & $M=5$ \\
\midrule
\multicolumn{5}{l}{\textbf{Random Ensemble} } \\
\midrule
\rebutadd{Inv}  & \rebutadd{77.4{\scriptsize$\pm$0.0}} & \rebutadd{78.0{\scriptsize$\pm$0.1}} & \rebutadd{78.4{\scriptsize$\pm$0.0}} & \rebutadd{78.5{\scriptsize$\pm$0.0}} \\
Eq  & 78.2{\scriptsize$\pm$0.1} & 78.7{\scriptsize$\pm$0.1} & 78.9{\scriptsize$\pm$0.1} & 79.1{\scriptsize$\pm$0.0} \\
Eq + Inv & 78.2{\scriptsize$\pm$0.1} & 78.8{\scriptsize$\pm$0.0} & 79.1{\scriptsize$\pm$0.1} & 79.3{\scriptsize$\pm$0.1}\\
\midrule
\multicolumn{5}{l}{\textbf{Greedy Ensemble} } \\
\midrule
\rebutadd{Inv} & \rebutadd{77.65{\scriptsize$\pm$0.02}} & \rebutadd{78.24{\scriptsize$\pm$0.04}} & \rebutadd{78.59{\scriptsize$\pm$0.02}} & \rebutadd{78.75{\scriptsize$\pm$0.01}} \\
Eq   & 78.32{\scriptsize$\pm$0.00} & 78.87{\scriptsize$\pm$0.00} & 79.09{\scriptsize$\pm$0.01} & 79.17{\scriptsize$\pm$0.00} \\
Eq + Inv & 78.32{\scriptsize$\pm$0.01} & 78.94{\scriptsize$\pm$0.03} & 79.28{\scriptsize$\pm$0.01} & 79.43{\scriptsize$\pm$0.05}\\
\bottomrule
\end{tabular}
\end{sc}
\end{small}
\end{center}
\end{table}

\subsection{\mse captures meaningful diversity that leads to improved performance}
\label{sec:diversity_vs_performance}
We now compare deep ensembles (DE) constructed with models from the leading hypothesis ($\mathrm{Eq}$) against \mse, which combines models from both hypotheses ($\mathrm{Eq} + \mathrm{Inv}$), as shown in~\cref{tab:IN_results} for ImageNet. 
Intuitively, given that $\mathrm{Eq}$ outperforms $\mathrm{Inv}$ significantly by 0.9\%, one might expect to get larger gains by adding high-accuracy models from the leading hypothesis to the ensemble. 
Instead, we found ensembles involving lower-accuracy models from the opposing hypothesis to be better, with \mse ($\mathrm{Eq} + \mathrm{Inv}$) outperforming DE of rotational equivariant models ($\mathrm{Eq}$) consistently across all ensemble sizes.
\cref{fig:E_vs_EI_greedy} further highlights the gap between the ensemble accuracy of $\mathrm{Eq} + \mathrm{Inv}$ and $\mathrm{Eq}$.
Ensembles constructed only from the leading hypothesis quickly result in marginal improvements gained from adding more members; by $M=5$, the ensemble accuracy plateaus and does not benefit from further addition of more models. 
On the other hand, the ensemble accuracy of \mse demonstrates \textit{greater potential and continues to benefit from increasing ensemble sizes}.
\vspace{-0.3em}

\begin{figure}[t]
\begin{center}
\centerline{\includegraphics[width=\columnwidth]{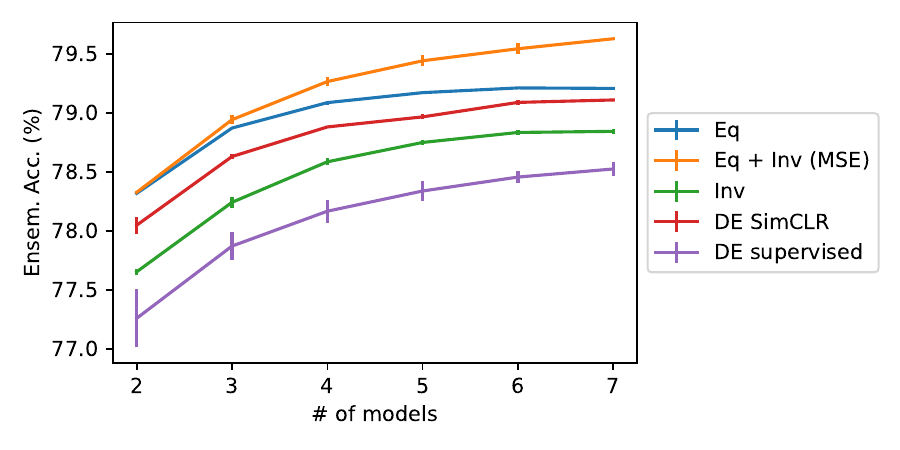}}
\caption{\textbf{Ensembles with opposing hypotheses have significantly larger potential.} Ensembles constructed only from a single hypothesis very quickly give marginal ensembling gains from adding more members. DE SimCLR and DE supervised refer to deep ensembles of baseline SimCLR (neither equivariant nor invariant) and supervised learning (without pretraining) 
respectively.}
\label{fig:E_vs_EI_greedy}
\end{center}
\vskip -0.3in
\end{figure}

\paragraph{Greedy search finds alternating sequences.}
Interestingly, the outcome of the greedy search produces the following sequence of models: [$\mathrm{Eq}$, $\mathrm{Inv}$, $\mathrm{Eq}$, $\mathrm{Eq}$, $\mathrm{Inv}$, $\mathrm{Eq}$, $\mathrm{Inv}$], that almost alternates between adding an equivariant and an invariant model at every step.
This result suggests that in order to best maximize ensemble accuracy, it is ideal to construct ensembles that contain opposing hypotheses. 
\vspace{-0.2em}

\paragraph{MSE's performance can be attributed to greater ensemble diversity.}
To further analyze the effectiveness of \mse ($\mathrm{Eq}+\mathrm{Inv}$) over the DE of $\mathrm{Eq}$ hypotheses, we evaluate their diversity on commonly used metrics, such as the error inconsistency~\cite{no_one_rep_rules} between pairs of models, variance in predictions~\cite{kendall_diversity} and pair-wise divergence measures~\cite{DE_losslandscape} of the prediction distribution.
We use error inconsistency as the main measure of diversity given its intuitive nature, which can be described as the fraction of samples where only one of the models makes the correct prediction, averaged over all possible pairs of models in the ensemble. 
Other diversity measures are defined in~\cref{appendix:div_measures}.
Ensemble diversity is an important criterion since higher ensembling performance is derived when individual models make mistakes on different samples. 
~\cref{tab:div} demonstrates that by including models from opposing hypotheses, \mse indeed achieves a greater amount of diversity compared to the DE of $\mathrm{Eq}$, consistently across all the diversity metrics.  

\paragraph{Comparison between ensembling methods.}
\cref{fig:div_eff} further compares \ourmodels across some alternative methods to creating ensembles: ensembling models trained with supervised learning~\cite{DE} (Sup), models that are separately fine-tuned with randomly initialized linear head but using the same pre-trained backbone (SSL\_FT), models trained with the baseline SimCLR~\cite{simclr} (SSL), models trained with Equivariant SSL~\cite{essl} (E\_SSL) and models with opposing equivariance (E+I\_SSL). 
Apart from E+I\_SSL, all other methods create models from a single hypothesis.
Unsurprisingly, SSL\_FT produces ensembles with particularly poor diversity due to the limited variance between members since they differ only in the initalization of the linear heads.
In general, the ensemble diversity is directly correlated with the ensemble efficiency (defined as the performance improvement relative to the mean accuracy of all the models in the ensemble~\cite{no_one_rep_rules}).
However, larger ensemble diversity does not necessarily lead to greater ensemble accuracy, since it is also important for the individual models to be high performing.
This is evident in ensembles of supervised models -- while they demonstrate high diversity and ensemble efficiency, their ensemble accuracy is poorer than their SSL counterparts since SSL produces higher performing models.
\begin{table}[tb]
\caption{\textbf{Diversity of ensembles.} We compare the diversity across several metrics for ensembles with $M=3$ members: error inconsistency, variance of the logits, variance of the probabilities and KL-divergence between pair-wise predictions. In all metrics, higher the score, greater the diversity. }
\label{tab:div}
\setlength\tabcolsep{4.5pt}
\begin{center}
\begin{sc}
\scalebox{0.85}{
\begin{tabular}{lcccccc}
\toprule
& Incons.(\%) & Logits & Prob ($10^{-4}$) & KL-div \\
\midrule
\rebutadd{Inv}  & \rebutadd{17.0{\scriptsize$\pm$0.1}} & \rebutadd{0.88{\scriptsize$\pm$0.02}} & \rebutadd{2.85{\scriptsize$\pm$0.04}} & \rebutadd{0.332{\scriptsize$\pm$0.012}} \\
Eq  & 15.6{\scriptsize$\pm$0.1} & 0.82{\scriptsize$\pm$0.01} & 2.64{\scriptsize$\pm$0.00} & 0.287{\scriptsize$\pm$0.001} \\
Eq + Inv & 17.5{\scriptsize$\pm$0.1} & 0.94{\scriptsize$\pm$0.01} & 2.94{\scriptsize$\pm$0.00} & 0.359{\scriptsize$\pm$0.007} \\
\bottomrule
\end{tabular}
}
\end{sc}
\end{center}
\vspace{-1em}
\end{table}

\begin{figure}[tb]
\vskip 0.2in
\begin{center}
\centerline{\includegraphics[width=\columnwidth]{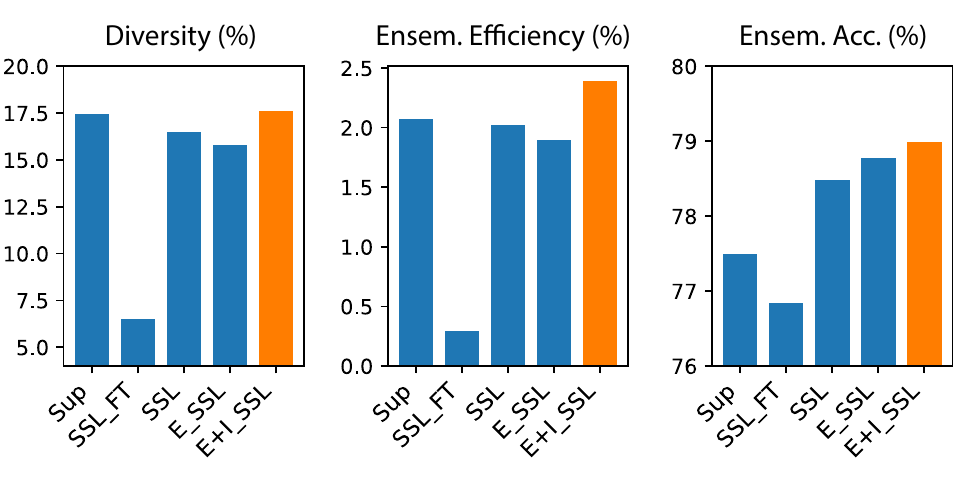}}
\vskip -0.1in
\caption{\textbf{Comparison between ensembling methods} for $M=3$: supervised ensembles (Sup), models created from separate fine-tuning on the same backbone (SSL\_FT), models pre-trained with SimCLR~\cite{simclr}, models pre-trained with equivariant-SimCLR~\cite{essl} (E\_SSL) and \ourmodels of opposing hypotheses (E+I\_SSL). Diversity is measured by pair-wise error inconsistency, ensemble efficiency is defined as the relative improvement over the mean accuracy of the members. }
\vskip -0.3in
\label{fig:div_eff}
\end{center}
\end{figure}

\subsection{\mse can quantify uncertainty better but may require more models}
\label{sec:uq}
A strong motivation to using an ensemble of models is it provides a way to quantify uncertainty from a Bayesian perspective~\cite{wilson2020case,uq_ovadia}. 
To evaluate the quality of the ensembles' uncertainty estimates, we use the negative log likelihood (NLL), 
which is a proper scoring rule and a popular metric used to evaluate \emph{predictive uncertainty}~\cite{DE,pred_uncertainty_challenge}.
As seen from~\cref{tab:uq}, for ensembles consisting of 3 members or more, \ourmodels built from opposing hypotheses (Eq + Inv) performs slightly better in terms of NLL when compared to ensembles with members only sampled from a single hypothesis (Eq).
However, when there are very few members in the ensemble ($M=2$), \ourmodels of opposing hypotheses performed slightly worse than single hypothesis according to the NLL metric. 
This is perhaps unsurprising, since the space of hypotheses is much larger with models from non-overlapping hypotheses and thus such an ensemble is likely to require more members in order to quantify the uncertainty surrounding each of these hypotheses.
The NLL results show consistent trends the different ensembles.

\begin{table}[tb!]
\caption{\textbf{Uncertainty Quantification.} We evaluate the uncertainty quantification of our greedy ensembles, using the negative log likelihood loss (NLL) and the `area under the uncertainty quantification curve' (AUUQC) which is obtained by sequentially removing the most uncertain samples and computing the area under the plot of ensemble accuracy versus fraction of samples removed. See Appendix \ref{appendix:uncertainty} for results on our random ensembles.}
\label{tab:uq}
\setlength\tabcolsep{3pt}
\begin{center}
\begin{small}
\begin{sc}
\begin{tabular}{lcccccccc}
\toprule
& $M=2$ & $M=3$& $M=4$ & $M=5$ \\
\midrule
\multicolumn{4}{l}{\textbf{NLL} $\downarrow (10^{-1})$} \\
\midrule
\rebutadd{Inv}  & \rebutadd{8.77{\tiny$\pm$0.03}} & \rebutadd{8.49{\tiny$\pm$0.01}} & \rebutadd{8.35{\tiny$\pm$0.00}} & \rebutadd{8.26{\tiny$\pm$0.00}}\\
Eq  & 8.40{\tiny$\pm$0.00} & 8.18{\tiny$\pm$0.00} & 8.06{\tiny$\pm$0.00} & 7.99{\tiny$\pm$0.00}\\
Eq + Inv & 8.43{\tiny$\pm$0.03} & 8.16{\tiny$\pm$0.01} & 8.03{\tiny$\pm$0.01} & 7.97{\tiny$\pm$0.01}\\
\midrule
\multicolumn{4}{l}{\textbf{AUUQC} $\uparrow$} \\
\midrule
\rebutadd{Inv}  & \rebutadd{0.919{\tiny$\pm$0.000}} & \rebutadd{0.924{\tiny$\pm$0.000}} & \rebutadd{0.925{\tiny$\pm$0.000}} & \rebutadd{0.926{\tiny$\pm$0.000}} \\
Eq  & 0.921{\tiny$\pm$0.000} & 0.925{\tiny$\pm$0.000} & 0.927{\tiny$\pm$0.000} & 0.928{\tiny$\pm$0.000} \\
Eq + Inv & 0.921{\tiny$\pm$0.000} & 0.926{\tiny$\pm$0.000} & 0.928{\tiny$\pm$0.000} & 0.929{\tiny$\pm$0.000} \\
\bottomrule
\end{tabular}
\end{sc}
\end{small}
\vspace{-1em}
\end{center}
\end{table}

To further evaluate the ensembles' ability to quantify \emph{model uncertainty} \citep{Gal2016Uncertainty}, we also consider a different metric using an uncertainty-based prediction rejection setup, described as follows.
We sequentially remove pools of test samples with the highest uncertainty from the ensemble and evaluate the ensemble accuracy on the remaining samples. 
This allows us to plot a curve of \textit{fraction of samples removed} against \textit{ensemble accuracy} which asymptotically approaches one when all samples are removed.  
An ensemble that ``knows when it does not know'' would produce a curve that is closer to the upper-left corner, since it can more accurately remove uncertain samples to give higher ensemble accuracies more quickly.
We use the commonly used uncertainty measure BALD in the active learning framework~\citep{yarin_uq,bald_uq},
which is defined the information gained of the model parameters;
see Appendix~\ref{appendix:bald} for a definition.
Samples with large BALD would have the highest probability assigned to a different class on every stochastic forward pass~\cite{yarin_uq} and thus have the highest model uncertainty.
We compute and report the area under this curve and call it the ``Area under the uncertainty quantification curve'' (AUUQC) --- higher AUUQC signifies better uncertainty quantification (see~\cref{fig:auuqc} in~\cref{appendix:auuqc} for an illustration of this curve).
Under this metric, we found that ensembles of opposing hypotheses (Eq + Inv) consistently outperforms ensembles of a single hypothesis across ensembles of different sizes.

\begin{table}[t]
\caption{\textbf{Ensemble performance on transfer tasks using the greedy approach.} Ensemble efficiency is defined as the relative improvement from the mean accuracy of all the models in the ensemble. All experiments are fine-tuned except iNaturalist-1k which is linear-probed. Note that by construct of the greedy approach, $\mathrm{Eq}+\mathrm{Inv}$ searches over possible $\mathrm{Eq}$ and $\mathrm{Inv}$ models and thus will be \textit{at least as good as $\mathrm{Eq}$}, i.e. datasets with equal performance for $\mathrm{Eq}$ and $\mathrm{Eq}+\mathrm{Inv}$ do not benefit from the opposing hypothesis. See ~\cref{appendix:transfer_random} for results on our random ensembles.}
\label{tab:transfer1}
\setlength\tabcolsep{3pt}
\vskip -0.3in
\begin{center}
\begin{sc}
\resizebox{0.5\textwidth}{!}{
\begin{tabular}{lcccc}
\toprule
         & iNaturalist-1K & Flowers-102 & CIFAR-100 & Food-101  \\ \midrule
\multicolumn{5}{l}{\textbf{Single Model Accuracy}}                            \\ \midrule
Eq       & 55.1 {\tiny$\pm$0.3}     & 91.9 {\tiny$\pm$0.0}      & 85.5 {\tiny$\pm$0.1}      & 87.9 {\tiny$\pm$0.1}              \\
Inv      & 56.3 {\tiny$\pm$0.2}     & 91.2 {\tiny$\pm$0.1}      & 84.0 {\tiny$\pm$0.1}      & 87.9 {\tiny$\pm$0.1}            \\ \midrule
\multicolumn{5}{l}{\textbf{Ensemble Accuracy} ($M=2$) \green{Ensemble efficiency}}                          \\ \midrule
Eq       & 58.4 {\tiny$\pm$0.0} \green{3.3} & 92.7 {\tiny$\pm$0.0} \green{0.8}        & 86.6 {\tiny$\pm$0.0} \green{1.1}     & 89.3 {\tiny$\pm$0.1} \green{1.4}         \\
Eq $+$ Inv & 59.9 {\tiny$\pm$0.2} \green{4.2} & 93.1 {\tiny$\pm$0.1} \green{1.5}       & 86.6 {\tiny$\pm$0.1} \green{1.4}      & 89.5 {\tiny$\pm$0.1} \green{1.6}           \\ \midrule
\multicolumn{5}{l}{\textbf{Ensemble Accuracy} ($M=3$) \green{Ensemble Efficiency}}                          \\ \midrule
Eq       & 59.8 {\tiny$\pm$0.0} \green{4.7} & 92.9 {\tiny$\pm$0.1} \green{1.0}      & 87.1 {\tiny$\pm$0.0} \green{1.6}      & 89.9 {\tiny$\pm$0.0} \green{2.0}           \\
Eq $+$ Inv & 61.4 {\tiny$\pm$0.1} \green{5.5} & 93.2 {\tiny$\pm$0.1} \green{1.4}      & 87.2 {\tiny$\pm$0.0} \green{1.8}      & 90.1 {\tiny$\pm$0.1} \green{2.2}           \\ \bottomrule
\end{tabular}
}
\end{sc}
\end{center}
\vskip -0.15in
\end{table}

\subsection{Different tasks may have different leading hypotheses and thus \mse transfers better}
Another important axis to evaluate is the generalization of the learned representations in \mse. To this end, we conduct transfer learning experiments using pre-trained \mse on four downstream tasks. As shown in  Table \ref{tab:transfer1}, \mse improves transfer performance in majority of the cases. In the largest and most diverse dataset iNaturalist-1K, we see consistent improvements of $1.5\%$ and $1.6\%$ from \mse in the respective cases of $M=2$ and $M=3$. Also, across the four transfer tasks, it is evident that ensemble efficiency, the change in performance of the ensemble relative to the mean accuracy of the individual models in the ensemble, always improves significantly with our method except in one case. In Section \ref{sec:effectiveness}, we further empirically analyze the circumstances under which our \ourmodels prove to be more useful. An interesting phenomenon to highlight in these results is that the dominant hypothesis can change depending on the downstream task. In the pre-training dataset (ImageNet), the equivariant model always proved to be the dominant hypothesis, outperforming the invariant model by $0.9\%$. However, after transfer learning on iNaturalist-1K, for example, the invariant model switches to become the dominant hypothesis, outperforming the equivariant model by $1.2\%$. This result emphasizes that different downstream tasks encompass different sets of hypotheses and therefore an ensemble of opposingly equivariant models can lead to better generalization.

\subsection{Effectiveness of \mse depends on dataset diversity}
\label{sec:effectiveness}

In this section, we aim to provide empirical guidance on when the inclusion of opposing hypotheses in an ensemble is beneficial. We evaluate the proportion of classes dominated by each of the opposing hypotheses (invariant and equivariant symmetries) for different datasets, including iNaturalist-1k, CIFAR-100, ImageNet-V2, and ImageNet-R. These results are shown in \cref{fig:eff_bar}. Our findings indicate that on datasets such as iNaturalist-1k, the inclusion of opposing hypotheses in the ensemble improves performance. However, on datasets like CIFAR-100 and ImageNet-R, the opposing hypotheses do not provide significant gains. This is because these datasets have a high level of imbalance between the dominance of the two hypotheses, with one hypothesis dominating in a majority of the classes. For example, in ImageNet-R, the equivariant hypothesis dominates in 76.5\% of the classes while the invariant hypothesis only dominates in 18\% of classes. These datasets are poorly described by the opposing hypothesis and thus including them in the ensemble provides little to no improvement in performanc\begin{figure}[t]
\begin{center}
\centerline{\includegraphics[width=\columnwidth]{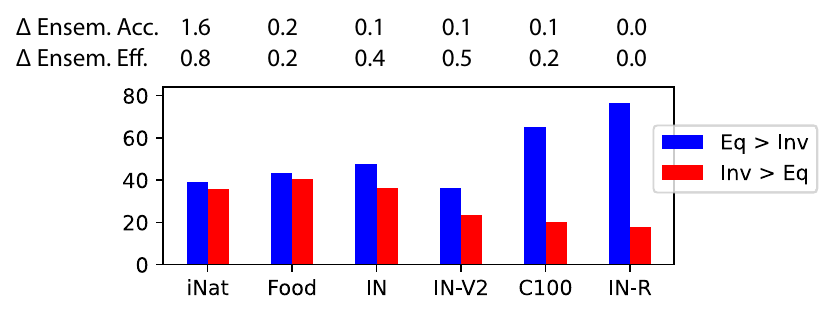}}
\caption{\textbf{Understanding the effectiveness of including the opposing hypothesis.} Plot shows the proportion of classes in each dataset where each hypothesis dominates. The remaining proportions (not shown) are classes where Eq and Inv are equally performant. Gains are minimal in datasets with a high level of imbalance between the leading and opposing hypothesis.}
\label{fig:eff_bar}
\end{center}
\vskip -0.2in
\end{figure}e.\looseness=-1

\subsection{Exploring different symmetry groups captures further meaningful diversity}
\label{sec:transform}
So far, we have shown that capturing opposing hypothesis along the axis of rotational symmetry increases the diversity and performance of model ensembles. 
A natural question arises: can other symmetry groups be useful as well? 
Specifically, referring back to the illustration in~\cref{fig:illus}, is it sufficient to capture diversity around the opposing hypotheses of a single symmetry group, or would opposing hypotheses across symmetry groups further add meaningful diversity? 
To address this question, we conduct an ablation study with two additional transformations, half swap (random swapping of the upper and lower halves of an image) and color inversion (randomly inverting the color of an image). 
Due to computational limitations, we conduct this ablation study on ImageNet-100, a subset of ImageNet generated by randomly selecting 100 classes from ImageNet-1k and train the classifiers using linear-probing. 
This dataset contains about 129k samples and thus is still sufficiently diverse.

We present the results in~\cref{tab:mixingtransform}.
In the upper three rows, we create ensembles that consist of both equivariant and invariant learners with respect to a single axis of transformation. In the last row, we greedily search over the space of models that were trained across the three axes of transformations (rotation, half swap, and color inversion). By exploring multiple symmetry groups, we find additional diversity that improves the performance by up to 1.2\%. 
This result bolsters the value of exploring multiple groups of opposing hypotheses and highlights the potential for future research directions to more effectively combining these models.

\begin{table}[t]
\caption{\textbf{Capturing opposing hypotheses across transformations for $M=6$.} The upper three rows are ensembles that consist of both equivariant and invariant learners with respect to a single transformation and the bottom row greedily searches over all models across the three transformations.}
\label{tab:mixingtransform}
\vskip 0.15in
\begin{center}
\begin{small}
\begin{sc}
\begin{tabular}{@{}lcc@{}}
\toprule
\multicolumn{3}{l}{\textbf{Ensemble Accuracy on ImageNet-100 (\%)}}                  \\ \midrule
Rotate                     & \multicolumn{2}{c}{86.60} \\
Halfswap                    & \multicolumn{2}{c}{86.00} \\ 
ColorInvert                    & \multicolumn{2}{c}{86.26} \\  \midrule
Rotate + Halfswap + ColorInvert                   & \multicolumn{2}{c}{87.22} \\ \bottomrule
\end{tabular}
\end{sc}
\end{small}
\end{center}
\vskip -0.15in
\end{table}

\section{Conclusion \rebutadd{and Limitations}}
In this work, we have showed that many large vision datasets benefit from a multiplicity of hypotheses, particularly along different axes of symmetries.
To address this, we proposed to ensemble members from opposing hypotheses, disregarding the fact that models from the opposing hypothesis are significantly poorer performing. 
We showed that despite their lower accuracies, ensembles containing the opposing hypotheses are meaningfully diverse and outperform current ensembling approaches of exploring the leading hypothesis class in multiple metrics of ensemble performance, ensemble potential, uncertainty quantification and generalization across transfer tasks.

While we explored a simple deep ensembling approach to combine multiple hypotheses, in principle one could also combine these hypotheses in alternative model combination approaches such as stacking~\cite{WOLPERT1992241} and mixture of experts~\cite{moe,moe2,moe_ensemble}.
\rebutadd{Furthermore, since equivariance and invariance are invoked in the pre-training stage, the construction of these ensembles have higher computational costs compared to supervised deep ensembles that are trained from scratch (but on par with deep ensembles of contrastive learners), further work could look into more efficient methods to invoke equivariance and invariance during fine-tuning to mitigate this.
Finally, while we found \mse to be highly effective in diverse, natural vision datasets, its effectiveness is dependent on dataset diversity (for e.g. less effective in ImageNet-R); we provide some intuition for these cases in Appendix B.
}
We hope the findings from our work can motivate future research in these directions.

\section{Acknowledgements}
This work was sponsored in part by the United States Air Force Research Laboratory and the United States Air Force Artificial Intelligence Accelerator and was accomplished under Cooperative Agreement Number FA8750-19-2-1000. The views and conclusions contained in this document are those of the authors and should not be interpreted as representing the official policies, either expressed or implied, of the United States Air Force or the U.S. Government. The U.S. Government is authorized to reproduce and distribute reprints for Government purposes notwithstanding any copyright notation herein. 

C.L. acknowledges fellowship support from the DSO National Laboratories, Singapore.

\newpage

\nocite{langley00}

\bibliography{example_paper}
\bibliographystyle{icml2023}

\newpage
\appendix
\onecolumn
\section{Formalism and Intuition}
\label{app:formalism}

We show analytically that the functional classes of invariant and equivariant contrastive learners are different in a simple setting. As our assumptions are strong and simplistic, we only aim to provide intuition through a simple formalism. 
Our experiments in~\cref{sec:results} support this intuition without the strong assumption and demonstrate the diversity from equivariance using real-world examples.

\begin{assumption}\label{assumpt:simple_analytical}
Consider a linear model class from~\citep{kumar2022fine}, which is $f_{v,B}(x)=v^TBx$, where $B \in \mathbb{R}^{k \times d},$ is a linear encoder, $v \in \mathbb{R}^k$ is a linear head and $x \in \mathbb{R}^d$ is a datapoint. Consider an invariant model, $f^\mathrm{inv}_v(x) \colonequals v^TBx,$ such that $BT_g(x)=Bx$ for every $g \in G$ and $x \in X$. Let $f^\mathrm{equiv}_v(x) \colonequals v^TB'x$ be an equivariant model, such that $B'T_g(x)=T'_g(B'x)$ for all $g \in G$.  Here we assume that $B$ and $B'$ are pretrained and fixed encoders and so we are only training $v$. Thus, we can represent $v \equiv v(B)$ as a function of the backbone $B$. Let $\tilde{X}=[X^T \mid T_g(X)^T]^T \in \mathbb{R}^{2n \times d}$ be our training input data for some $X = \{x_i\}_{i=1}^n \in \mathbb{R}^{n \times d}$, a fixed group element $g \in G$, and $T_g(X) \colonequals \{T_g(x_i)\}_{i=1}^n.$ Assume the labels are $\tilde{y} = [y^T | y'^T]^T \in \mathbb{R}^{2n \times 1}$ where $y$ are the labels for $X$ and $y'$ are the corresponding labels for $T_g(X).$ Here we assume that the data contains all input images from $X$ and their transformation by $T_g$. Finally, assume an ordinary least squares (OLS) problem for learning $v$ with $(\tilde{X}B^T,\tilde{y})$ training data for the invariant case and $(\tilde{X}B'^T,\tilde{y})$ for the equivariant.
\end{assumption}
\begin{proposition}\label{prop:simple_analytical}
Under Assumption~\ref{assumpt:simple_analytical},
the solutions $v^\mathrm{inv}$ and $v^\mathrm{equiv}$ to the ordinary least squares problem for the corresponding $f^\mathrm{inv}$ and $f^\mathrm{equiv}$ with $(\tilde{X}B^T,\tilde{y})$ training data for the invariant case and $(\tilde{X}B'^T,\tilde{y})$ for the equivariant are:
\begin{align*}
v^\mathrm{inv}(B) & = & \frac{1}{2}(BX^TXB^T)^{-1}BX^T(y + y') \\
v^\mathrm{equiv}(B') & = & (B'X^TXB'^T + T'_g(XB'^T)^TT_g'(XB'^T))^{-1} (BX^Ty+T_g'(XB'^T)^Ty').
\end{align*}
\end{proposition}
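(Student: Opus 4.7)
The plan is to derive both formulas as direct consequences of the ordinary least squares normal equation $v = (A^T A)^{-1} A^T \tilde{y}$, where $A$ is the relevant design matrix. The only nontrivial work lies in simplifying the stacked design matrices $\tilde{X} B^T$ and $\tilde{X} B'^T$ using the invariance and equivariance hypotheses, respectively; after that, the results follow by pure bookkeeping.

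For the invariant case, I would first observe that since $\tilde{X}$ stacks $X$ on top of $T_g(X)$, the design matrix $A^\mathrm{inv} = \tilde{X} B^T$ has its first $n$ rows equal to $XB^T$ and its last $n$ rows equal to the matrix whose $i$-th row is $(B T_g(x_i))^T$. The invariance hypothesis $B T_g(x) = B x$ collapses this lower block to $XB^T$ as well, so that
\begin{equation*}
A^\mathrm{inv} = \begin{pmatrix} XB^T \\ XB^T \end{pmatrix}.
\end{equation*}
Then $(A^\mathrm{inv})^T A^\mathrm{inv} = 2\, BX^T X B^T$ and $(A^\mathrm{inv})^T \tilde{y} = BX^T y + BX^T y' = BX^T(y+y')$. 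Plugging these into the normal equation yields the stated expression for $v^\mathrm{inv}(B)$, with the factor $\tfrac{1}{2}$ coming from the doubled Gram matrix.

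For the equivariant case, the same blockwise decomposition gives an upper block $XB'^T$, but now the equivariance identity $B' T_g(x) = T'_g(B' x)$ rewrites the lower block as $T'_g(XB'^T)$ (interpreted as $T'_g$ applied rowwise, which is well-defined because $T'_g$ acts linearly on the representation space). Since $T'_g$ is generally not the identity, no collapse occurs, so the Gram matrix becomes $B'X^TXB'^T + T'_g(XB'^T)^T T'_g(XB'^T)$ and the projected target becomes $B'X^T y + T'_g(XB'^T)^T y'$, matching the claimed formula (I read the $BX^T y$ on the right as a harmless typo for $B'X^T y$). Substituting into the normal equation finishes the argument.

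The obstacles here are essentially notational rather than mathematical: one must fix a consistent meaning for $T'_g$ acting on a matrix of representation vectors, and one must tacitly assume that the relevant Gram matrices are invertible so that the OLS solution is unique. Neither of these requires additional work beyond what is already implicit in the assumption that the OLS problem is well posed, so the overall proof is just a clean application of the normal equations together with the invariance/equivariance identity.
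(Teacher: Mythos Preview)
Your proposal is correct and follows essentially the same approach as the paper: apply the OLS closed-form solution $(A^TA)^{-1}A^T\tilde{y}$ to the stacked design matrix and simplify the two blocks using the invariance or equivariance identity. Your write-up is in fact more explicit than the paper's (which just says ``simplifying the algebra completes the proof''), and your observation that $BX^Ty$ on the right of the equivariant formula should read $B'X^Ty$ is a genuine typo in the statement.
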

\begin{proof}
The proof is a simple combination of the OLS solution and the equivariance property. Namely, if the input data is $A$ and the target is $b$, then the OLS solution is $(A^TA)^{-1}A^Tb.$ Now, it suffices to replace the placeholder $b$ with $\tilde{y}$, and the placeholder $A$ with $\tilde{X}B'^T$ in the invariant case, and $\tilde{X}B'^T = [B'X^T | T_g'(XB'^T)^T]^T$ in the equivariant case. For the invariant case, we use the invariance property, which yields $T_g(X)B^T=XB^T$.
For the equivariant case, we use the equivariance property, which yields us $T_g(X)B'^T=T'_g(XB'^T).$ Simplifying the algebra completes the proof.
\end{proof}
As functions of the pretraining backbones ($B$ and $B'$), the two models in Assumption~\ref{assumpt:simple_analytical} yield different functional classes (or hypotheses) as it can be seen by the forms of solutions in Proposition~\ref{prop:simple_analytical}.
This analytical example provides us with further motivation to leverage on self-supervised models with opposing equivariances to capture diversity around multiple hypotheses.
We choose to ensemble these different members instead of training one model because a single model cannot be simulataneouly invariant and equivariant to the same transformation due to conflicting objectives (i.e., a model cannot be invariant to a transformation and still change its representations according to the transformation). 

\rebutadd{
\section{More discussion towards Equivariance and Invariance}
}
\label{appendix:more_discussion}
\rebutadd{
\paragraph{Comparison with Group Equivariant networks.}
The general notion of "Equivariance" typically emcompasses both non-trivial equivariance and invariance (i.e. trivial equivariance where $T_g'$ of~\cref{eq:eq_loss} is the identity.
For brevity and to maintain the convention used in~\cite{essl}, we however use the term ``equivariance" to specifically refer only to non-trivial equivariance (i.e. excluding invariance) in our work.
Equivariance in deep learning is most commonly known through the concept of Group Equivariant neural networks~\cite{cohen2016group,weiler_general_2019,weiler_3d_2018}. 
There, non-trivial equivariance and invariance to a particular group are achieved through equivariant architectures, by generalizing convolutional kernels to respect the symmetries of that group. 
These are often implemented in the form of equivariant layers, where the trivial instance of invariance can be acheived by invoking a global pooling function after a series of equivariant layers.
In our work, (non-trivial) equivariance and invariance to a particular transformation $T_m$ are achieved purely via training objectives --- invariance is achieved by adding $T_m$ into the set of augmentations used in contrastive learning that encourages representations to be invariant to and equivariance is achieved by adding an auxiliary self-supervised task that predicts the transformation $T_m$ applied to the input.  
The architecture we use for all models is a non-equivariant architecture, i.e. the common ResNet-50 model.
In this setting and our definition of ``equivariance'' that refers only to non-trivial equivariance, a single model cannot be equivariant and invariant simultaneously and thus the two form a set of opposing hypotheses.
}
\rebutadd{
\paragraph{Empirical intuition.} 
Equivariance to rotation has been known to be highly beneficial for learning visual representations~\cite{rotnet,essl}, however the underlying reasons are not so clear.
Empirically, we found the usefulness of rotation equivariance is generally related to pose or the existence of rotational symmetry in the dataset. 
We found that rotation equivariance is useful in image classes that often occur with a clear stance, for e.g. some classes of animals, where an upside-down dog is almost never observed in the dataset and thus the ability to recognize the rotation would require the features to encode information about its pose~\cite{rotnet}, aiding the characterization of dogs. 
On the other hand, we found that rotation invariance is useful in image classes that do not occur with a clear stance (for e.g. corkscrews that can be pictured in any orientation) or in images that have a clear rotation symmetry (e.g. flowers imaged from the front or analog clocks). 
}
\rebutadd{
\paragraph{Empirical intuition on datasets where MSE are effective.}
In our work, we found the effectiveness of MSE to be highly dependent on dataset diversity.
In particular, if the datasets are poorly described by the opposing hypothesis (i.e. ImageNet-R) as discussed in section 5.4, the gains from MSE would be negligible. 
Here, we provide some intuition on why this may be so.
Following the intuition provided in the previous paragraph, we conjecture that this could be related to the existence of a dominant pose of images in the dataset. 
An example of the class of “jellyfish” in ImageNet (IN) and IN-R is shown in~\cref{fig:jellyfish}. 
In IN-R which contains renditions of the images, such as in cartoon and art, many images assume a conventional ``upright" pose of the jellyfish with its head on top and its tentacles trailing below vertically.
However, in IN where real-life jellyfish are imaged, they often occur in multiple poses. 
We believe this is true for other classes as well, since artists often draw objects in their ‘conventional pose’.
Thus, for IN, invariant models are useful for 36.3\% (v.s. equivariant models being useful in 47.7\%). 
In contrast, for IN-R, invariant models are dominant only for 18\% of the classes (v.s. equivariant models being dominant in 76.5\%). Given the existence of an upright pose in IN-R, equivariant models that encode pose information are likely more useful than invariant models leading to this stark difference.  
}

\begin{figure}[ht]
\begin{center}
\includegraphics[width=0.44\columnwidth]{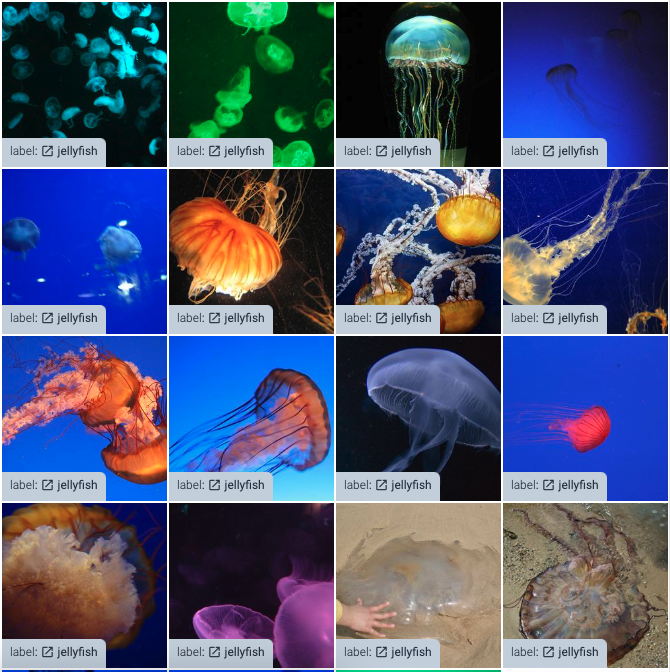}
\;\;\;\;
\includegraphics[width=0.45\columnwidth]{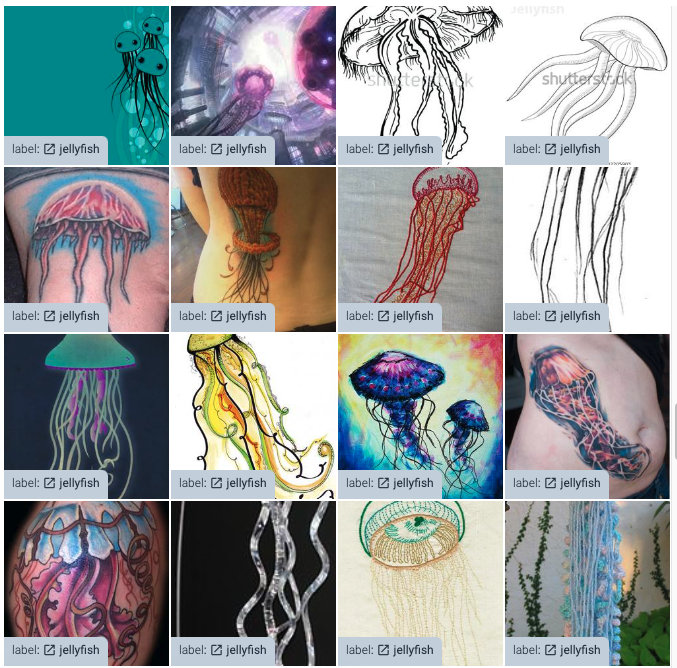}
\caption{\textbf{Examples of images from the ``jellyfish'' class in ImageNet (left) and ImageNet-R (right).} Samples visualized using \url{https://knowyourdata-tfds.withgoogle.com/}}
\label{fig:jellyfish}
\end{center}
\end{figure}

\rebutadd{
\section{Additional Training Details}
}\label{appendix:training_details}
\rebutadd{
\paragraph{All pre-training.}
We use the SGD optimizer with a learning rate of 4.8 (0.3 $\times BatchSize/256$). 
We decay the learning rate with a cosine decay schedule without restarts. 
Following~\cite{essl}, $T_{base}$ uses a slightly more optimal implementation that uses BYOL's augmentation (i.e. including solarization).
\paragraph{Equivariant pre-training.}
Following~\cite{essl}, the predictor for equivariance uses a smaller crop of $96\times 96$.
The predictor network uses a 3-layer MLP with a hidden dimension of 2048 to predict the corresponding transformation (i.e. 4-way rotation).
\paragraph{Invariant pre-training.}
For invariant models, the transformation $T_m$ is added to the base set of augmentations $T_{base}$ with probability $p=0.5$, i.e. with $0.5$ probability, one of the possible transformations ($0^{\circ},90^{\circ},180^{\circ},270^{\circ}$ for the case of 4-fold rotations) are applied. 
\paragraph{Explored hyperparameters for fine-tuning.}
For fine-tuning on ImageNet, we swept the learning rate ($lr \in \{0.1,0.03,0.01,0.003,0.004 \}$ for both equivariant and invariant models. 
We found $lr= 0.003$ to consistently give the best performance for equivariant models and $lr=0.004$ to consistently give the best performance for invariant models.
For fine-tuning on transfer tasks, we swept the learning rate $lr \in \{0.003, 0.1, 0.2, 0.5, 1.0, 5.0\}$ for each equivariant/invariant model and picked the best learning rate.
We set the weight-decay to $10^{-6}$ for all fine-tuning experiments.
}

\section{Ensemble Diversity}
\subsection{Diversity measures}
\label{appendix:div_measures}
\paragraph{Error inconsistency.} Following~\cite{no_one_rep_rules}, we use error inconsistency between pairs of models to quantify their diversity. 
For every sample and a pair of models, model A and model B, there are four possibilities: 1) both models are correct, 2) both models are wrong, 3) model A is correct and model B is wrong and 4) model B is correct and model A is wrong.
Samples that fall into the case of (3) and (4) constitute to the error inconsistency. 
We report the percentage of samples in the test set that pairs of models make inconsistent errors on. 
For ensembles more than $M=2$ members, we take the average over all possible pairs of models.

\paragraph{Variance of predictions.}
Another measure one could use to measure ensemble diversity is the variance of the predictions~\cite{kendall_diversity}:
\begin{equation}
\mathrm{Var}_{p(\mathbf{f})}[\mathbf{f}(\mathbf{x})] = \sum_{i=1}^C \mathrm{Var}_{p(\mathbf{f})}[f^{(i)}(\mathbf{x})]
\end{equation}
where $f^{(i)}$ refers to the probability assigned by the model to the $i$th class and $C$ is the total number of classes.
We report both the variance of the probabilities (labeled `prob' in~\cref{tab:div}) and the variance of the logits (before the softmax, labeled `logits' in~\cref{tab:div}).
\paragraph{Divergence measures.}
One can also use divergence metrics to quantify ensemble diversity~\cite{DE_losslandscape}. 
We simple use the KL-divergence between the prediction probability distributions of a pair of models, and take the average over all possible pairs in the ensemble.
\FloatBarrier

\subsection{Visualization of diversity across selected classes}
\label{appendix:vis_bars}
\Cref{fig:ei_vs_ee_bars} shows the accuracy per class for 10 randomly selected classes in ImageNet. 
The figure compares the performance of models trained with opposing equivariances (upper plot) and those with different random initializations (lower plot) and shows larger variances induced from opposing equivariant hypotheses.
Further analysis of their diversity is presented in~\cref{sec:diversity_vs_performance}.
The above results motivate the use of leveraging opposing equivariances as a method to induce diversity especially for large datasets like ImageNet. 

\begin{figure}[h!]
\vskip 0.2in
\begin{center}
\centerline{\includegraphics[width=0.6\columnwidth]{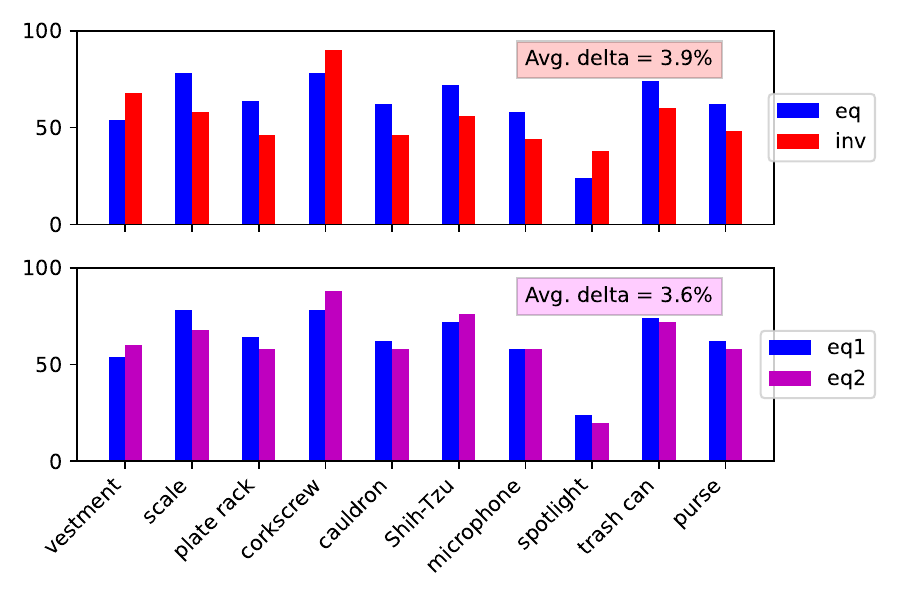}}
\caption{\textbf{Accuracy per class for 10 randomly selected classes in ImageNet.} Top panel compares the per class accuracy for a rotation equivariant model versus an invariant model and bottom panel compares the per class accuracy for two rotation equivariant models. }
\label{fig:ei_vs_ee_bars}
\end{center}
\vskip -0.2in
\end{figure}

\section{Uncertainty quantification results using random ensembles}

\cref{appendix:uncertainty} supplements the results in ~\cref{tab:uq} in the main text. While~\cref{tab:uq} shows the results for the greedy ensembling approach, this table shows the results for the random ensembling approach. In both of the cases, we see general improvements in the uncertainty quantification metrics with additional models.

\begin{table}[h!]
\caption{\textbf{Uncertainty Quantification.} We evaluate the uncertainty quantification of the ensembles using the negative log likelihood loss (NLL) and the `area under the uncertainty quantification curve' (AUUQC) which is obtained by sequentially removing the most uncertain samples and computing the area under the plot of ensemble accuracy versus fraction of samples removed.}
\label{tab:appendix_uq}
\setlength\tabcolsep{3pt}
\begin{center}
\begin{small}
\begin{sc}
\begin{tabular}{lcccccccc}
\toprule
& $M=2$ & $M=3$& $M=4$ & $M=5$ \\
\midrule
\multicolumn{5}{l}{\textbf{Random Ensemble} } \\
\midrule
\multicolumn{4}{l}{\textbf{NLL} $\downarrow (10^{-1})$} \\
\midrule
Eq  & 8.43{\scriptsize$\pm$0.03} & 8.20{\scriptsize$\pm$0.03} & 8.09{\scriptsize$\pm$0.02} & 8.03{\scriptsize$\pm$0.02}\\
Eq + Inv & 8.46{\scriptsize$\pm$0.01} & 8.17{\scriptsize$\pm$0.01} & 8.08{\scriptsize$\pm$0.02} & 7.98{\scriptsize$\pm$0.02}\\
\midrule
\multicolumn{4}{l}{\textbf{AUUQC} $\uparrow$} \\
\midrule
Eq  & 0.919{\tiny$\pm$0.001} & 0.924{\tiny$\pm$0.001} & 0.926{\tiny$\pm$0.000} & 0.927{\tiny$\pm$0.000}\\
Eq + Inv & 0.921{\tiny$\pm$0.001} & 0.926{\tiny$\pm$0.001} & 0.927{\tiny$\pm$0.000} & 0.928{\tiny$\pm$0.000}\\
\midrule
\multicolumn{5}{l}{\textbf{Greedy Ensemble} } \\
\midrule
\multicolumn{4}{l}{\textbf{NLL} $\downarrow (10^{-1})$} \\
\midrule
Eq  & 8.40{\tiny$\pm$0.00} & 8.18{\tiny$\pm$0.00} & 8.06{\tiny$\pm$0.00} & 7.99{\tiny$\pm$0.00}\\
Eq + Inv & 8.43{\tiny$\pm$0.03} & 8.16{\tiny$\pm$0.01} & 8.03{\tiny$\pm$0.01} & 7.97{\tiny$\pm$0.01}\\
\midrule
\multicolumn{4}{l}{\textbf{AUUQC} $\uparrow$} \\
\midrule
Eq  & 0.921{\tiny$\pm$0.000} & 0.925{\tiny$\pm$0.000} & 0.927{\tiny$\pm$0.000} & 0.928{\tiny$\pm$0.000} \\
Eq + Inv & 0.921{\tiny$\pm$0.000} & 0.926{\tiny$\pm$0.000} & 0.928{\tiny$\pm$0.000} & 0.929{\tiny$\pm$0.000} \\
\bottomrule
\end{tabular}
\end{sc}
\end{small}
\end{center}
\end{table}

\FloatBarrier
\section{Transfer results using random ensembles}
\label{appendix:transfer_random}
\Cref{tab:transfer_random} supplements the results in~\cref{tab:transfer1} in the main text. While~\cref{tab:transfer1} shows the results for the greedy ensembling approach, this table shows the results for the random ensembling approach.
\begin{table*}[h!]
\caption{\textbf{Transfer tasks for Random ensembles.} Ensemble efficiency is defined as the relative improvement over the mean accuracy of all the models in the ensemble. All experiments are fine-tuned except for iNaturalist-1k which is linear-probed.}
\label{tab:transfer_random}
\vskip 0.15in
\begin{center}
\begin{sc}
\resizebox{0.6\textwidth}{!}{
\begin{tabular}{@{}llllll@{}}
\toprule
         & iNaturalist-1K & Flowers-102 & CIFAR-100 & Food-101  \\ \midrule
\multicolumn{5}{l}{\textbf{Single Model Accuracy}}                            \\ \midrule
Eq       & 55.1 {\tiny$\pm$0.3}     & 91.9 {\tiny$\pm$0.0}      & 85.5 {\tiny$\pm$0.1}      & 87.9 {\tiny$\pm$0.1}              \\
Inv      & 56.3 {\tiny$\pm$0.2}     & 91.2 {\tiny$\pm$0.1}      & 84.0 {\tiny$\pm$0.1}      & 87.9 {\tiny$\pm$0.1}              \\ \midrule
\multicolumn{5}{l}{\textbf{Ensemble Accuracy} ($M=2$) \green{Ensemble efficiency}}                          \\ \midrule
Eq       & 58.3 {\tiny$\pm$0.1} \green{3.2}      & 92.3 {\tiny$\pm$0.4} \green{0.4}        & 86.6 {\tiny$\pm$0.2} \green{1.1}     & 89.2 {\tiny$\pm$0.1} \green{1.2}          \\
Eq $+$ Inv & 60.0 {\tiny$\pm$0.0} \green{4.3}      & 92.8 {\tiny$\pm$0.1} \green{1.3}       & 86.5 {\tiny$\pm$0.1} \green{1.8}      & 89.5 {\tiny$\pm$0.1} \green{1.6}        \\ \midrule
\multicolumn{5}{l}{\textbf{Ensemble Accuracy} ($M=3$) \green{Ensemble Efficiency}}                          \\ \midrule
Eq       & 59.8 {\tiny$\pm$0.0} \green{4.7}     & 92.4 {\tiny$\pm$0.2} \green{0.5}      & 87.1 {\tiny$\pm$0.1} \green{1.6}      & 89.9 {\tiny$\pm$0.0} \green{1.3}             \\
Eq $+$ Inv & 61.2 {\tiny$\pm$0.1} \green{5.5}       & 93.0 {\tiny$\pm$0.3} \green{1.3}      & 87.0 {\tiny$\pm$0.1} \green{2.0}      & 90.0 {\tiny$\pm$0.0} \green{2.1}        \\ \bottomrule
\end{tabular}
}
\end{sc}
\end{center}
\end{table*}
\FloatBarrier
\section{Uncertainty Quantification}\label{appendix:uncertainty}

\subsection{Definition of BALD}\label{appendix:bald}
In Section~\ref{sec:uq}, we use the commonly used uncertainty measure BALD~\citep{yarin_uq,bald_uq} to measure model uncertainty.
It is defined as below
\begin{equation*}
\mathbb{I}[y,\mathbf{w}|\mathbf{x}, \mathcal{D}] = \mathbb{H}[y|\mathbf{x}, \mathcal{D}] - \mathbb{E}_{p(\mathbf{w}|\mathcal{D})}\left[\mathbb{H}[y|\mathbf{x},\mathbf{w}]\right]
\end{equation*}
where $\mathcal{D}$ refers to the training set, $p(\mathbf{w}|\mathcal{D})$ is the posterior our ensemble approximates, $\mathbf{w}$ are the model parameters, i.e.~a member sampled from $p(\mathbf{w}|\mathcal{D})$, $\mathbb{H}[y|\mathbf{x}, \mathbf{w}]$ is the predictive entropy given model weights $\mathbf{w}$ and $\mathbb{H}[y|\mathbf{x},\mathcal{D}] = -\sum_c p(y=c|\mathbf{x},\mathcal{D}) \log p(y=c|\mathbf{x},\mathcal{D})$ is the entropy of the ensemble's prediction. 

\subsection{Area under uncertainty quantification curve (AUUQC)}
\label{appendix:auuqc}
\Cref{fig:auuqc} provides an illustration of the `uncertainty quantification curve' described in~\cref{sec:uq}, for ensembles of the leading hypothesis (rotation equivariant) with different ensemble sizes. 
As the ensemble size grows, the AUUQC increases as expected since a larger ensemble should be able to quantify uncertainty better.
\begin{figure}[h!]
\vskip 0.2in
\begin{center}
\centerline{\includegraphics[width=0.55\columnwidth]{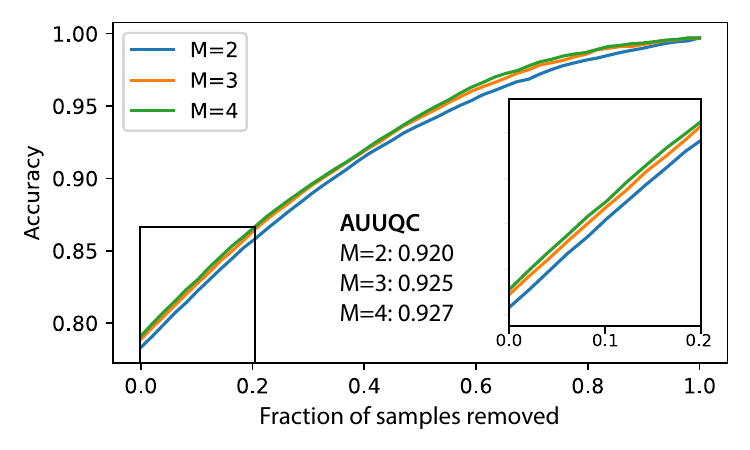}}
\caption{\textbf{Example of plot of the `uncertainty quantification curve' used to generate AUUQC}.}
\label{fig:auuqc}
\end{center}
\vskip -0.2in
\end{figure}

\section{Proportion of classes for hypothesis dominance}

\begin{table}[h!]
\caption{\textbf{Proportion of classes and performance gains in Transfer datasets.} The top half of the table detail the proportion of classes captured by dominating hypothesis for each transfer dataset. The bottom half describes the accuracy and ensemble efficiency gained by capturing opposing hypothesis over a single hypothesis. This table is used to generate the bar plot (\cref{fig:eff_bar}) in the main text}
\label{tab:transferprop}
\vskip 0.15in
\begin{center}
\begin{sc}
\scalebox{0.9}{
\begin{tabular}{lcccccc}
\toprule
& IN & iNat & Food & C100 & IN-V2 & IN-R \\
\midrule
Eq $>$ Inv  & 47.7 & 38.9 &  43.6 &  65.0 & 36.1 & 76.5 \\
Eq $<$ Inv  & 36.3 & 35.7 &  40.6 & 20.0 & 23.7 & 18.0 \\
\midrule
$\Delta_\mathrm{acc}$ (EI - EE) & +0.1 & +1.6 & +0.2 & +0.1 & +0.1 & 0.0 \\
$\Delta_\mathrm{eff}$ (EI - EE) & +0.4 & +0.8 & +0.2 & +0.2 & +0.5 & 0.0 \\
\bottomrule
\end{tabular}
}
\end{sc}
\end{center}
\end{table}

\begin{table}[h!]
\caption{\textbf{Proportion of classes and performance gains in ImageNet-100.} The top half of the table detail the proportion of classes captured by dominating hypothesis over different axes of transformations. The bottom half describes the accuracy and ensemble efficiency gained by capturing opposing hypothesis over a single hypothesis. This table supplements the results from \cref{tab:mixingtransform} in the main text}
\label{tab:in100proportions}
\vskip 0.15in
\begin{center}
\begin{sc}
\scalebox{0.9}{
\begin{tabular}{lcccccc}
\toprule
& Rotate & Halfswap & ColorInvert \\
\midrule
Eq $>$ Inv  & 65.0 & 28.0 & 48.0   \\
Eq $<$ Inv  & 15.0 & 44.0 & 26.0   \\
Eq $==$ Inv  & 20.0 & 28.07 & 26.0   \\
\midrule
$\Delta_\mathrm{acc}$ (EI - EE) &  0.0 & 0.04 & 0.14 \\
$\Delta_\mathrm{eff}$ (EI - EE) & 0.0 & 0.27 & 0.24 \\
\bottomrule
\end{tabular}
}
\end{sc}
\end{center}
\end{table}


\end{document}